\newtheorem{theorem}{Theorem}
\newtheorem{example}{Example}
\long\def\invis#1{}
\newcommand{\C}{\mathcal{C}}
\newcommand{\Cfree}{\C_\textrm{free}}
\newcommand{\Cobs}{\C_\textrm{obs}}
\newcommand{\Csub}{\C_\textrm{sub}}
\newcommand{\qinit}{q_{\rm init}}
\newcommand{\qgoal}{q_{\rm goal}}
\newcommand{\R}{\mathbb{R}}
\title{\LARGE \bf RRT$^+$: Fast Planning for High-Dimensional Configuration Spaces}
\author{Marios Xanthidis$^{1}$, Ioannis Rekleitis$^{1}$, Jason M. O'Kane$^{1}$
\thanks{$^{1}$Marios Xanthidis, Ioannis Rekleitis and Jason M. O'Kane are with the Computer Science and Engineering Department, University of South Carolina,
United States, {\tt\small mariosx@email.sc.edu, [yiannisr, jokane]@cse.sc.edu}}%
}
\begin{document}

\maketitle
\thispagestyle{empty}
\pagestyle{empty}

\begin{abstract}
In this paper we propose a new family of RRT based algorithms, named RRT$^+$, that are able to find faster solutions in high-dimensional configuration spaces compared to other existing RRT variants by finding paths in lower dimensional subspaces of the configuration space. The method can be easily applied to complex hyper-redundant systems and can be adapted by other RRT based planners. We introduce RRT$^+$ and develop some variants, called PrioritizedRRT$^+$, PrioritizedRRT$^+$-Connect, and PrioritizedBidirectionalT-RRT$^+$, that use the new sampling technique and we show that our method provides faster results than the corresponding original algorithms.  Experiments using the state-of-the-art planners available in OMPL show superior performance of RRT$^+$ for high-dimensional motion planning problems.   \invis{Moreover, the algorithm can be used for producing paths that minimize the motion on certain DOFs, if possible, in case the system has certain known constraints for specific tasks.}

\end{abstract}

 \begin{keywords}Motion and Path Planning, Redundant Robots, Robust/Adaptive Control of Robotic Systems\end{keywords}

\section{Introduction} 
\label{sec_intro}

Despite the dramatic development of robotics in recent decades, robots are still far from outperforming humans in operations for which they are not specialized, however few types of robots are capable of completing generic tasks. \invis{At the same time, only a few robots can execute more general tasks.} Complex robots such as mobile manipulators, snake robots, or humanoids have been presented mostly in experimental contexts. While the fact that a human adult has 244~degrees of freedom~\cite{kuo1994mechanical}, emphasizes the need for such complicated systems, which would compare favorably with human performance in many domains.

\invis{The reasons for not having already powerful complex robotic systems, considering the different fields of robotics, can be partitioned into four big issues: The structural complexity that arises from complicated systems, the observability of the system that becomes harder with more complicated structures, the unpredictable events in the environment that can cause unpredictable behavior in more complicated robots, and real-time planning that becomes more timely expensive, even affordable, in higher dimensions.}

This study focus on the path planning problem in high-dimensional configuration spaces and aims to provide a new family of motion planners that generate faster solutions than other RRT-based motion planners.\invis{, without a near-optimality guarantee.}  The approach is based on the observation that
for many hyper-redundant systems, it is rare for all the kinematic abilities of the system to be needed for a certain task~\cite{RekleitisMed2016}.
\invis{Therefore, the proposed algorithms utilize constraints on the DOFs to find solutions faster.}

\begin{figure}[ht]
\centering
\fbox{\includegraphics[width=0.4\textwidth, clip=true, trim=0.0in 3.5in 2.2in 1.5in]{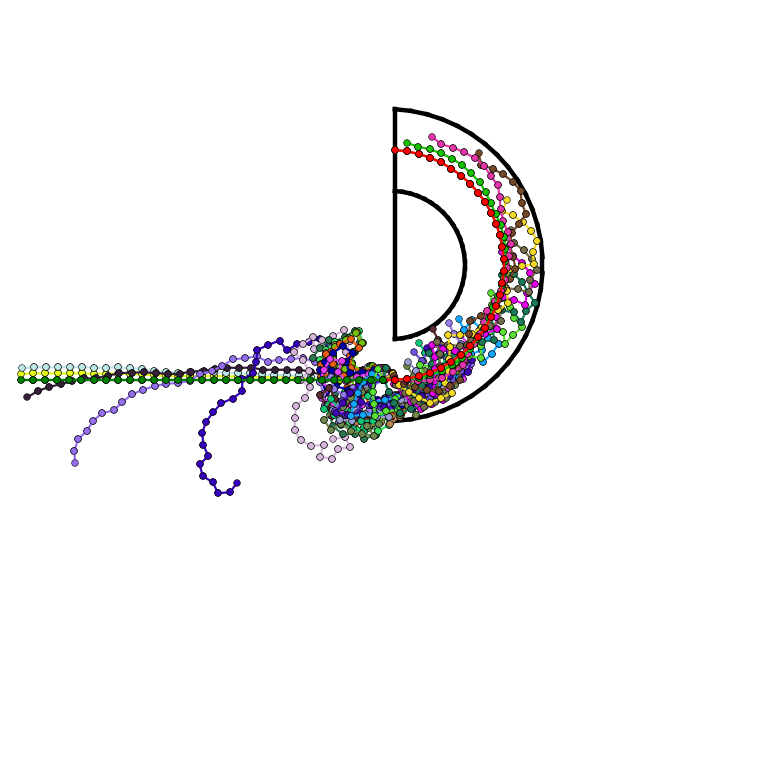}}
\caption{Solution for a 30-DOF kinematic chain in a highly constrained horn environment using the proposed algorithm.}
\label{fig:4d}
\end{figure}

\invis{\begin{itemize}
\item In many critical cases, having fast and non optimal solutions can guarantee the survival of the system, rather than having belated optimal, or near optimal, ones.
\end{itemize}}

In this paper, a novel RRT-based family of algorithms, termed
RRT$^+$, is presented.  These algorithms incrementally search in subspaces of the configuration space
$\C$. The modifications are not only related to the sampler but also to the way
the RRT is expanded since the entire search is separated into a series of
subsearches, confined to subspaces of increasing dimensionality.

\invis{The subsearches are related to each different
subspace where the tree is expanded, while when a sub-search is finished, the
sampler is refined for the next subsearch.} We show that these algorithms keep
all the theoretical guarantees of the original RRT algorithms, while at the same time can provide
much faster solutions in many cases.
\invis{although in theory in the uncommon worst case can have a
sufficiently slower performance.}
Moreover, the idea can easily be grafted to other motion planners --- in the
authors' experience, sometimes in just a few minutes of programming --- resulting in
significant increases in performance.
Furthermore, the efficiency can be significantly increased by having prior
information about the system. 
Figure \ref{fig:4d} presents an example plan for a 30 DoF manipulator in a tight environment computed by one of the proposed algorithms, namely PrioritizedRRT$^+$-Connect.

\invis{Last the PrioritizedRRT$^+$ is presented as an easy to apply planner in the RRT$^+$ family, that can be tuned to provide much faster results and also has the ability to find paths with a minimum range for some desired DOFs, if possible.}

The remainder of this paper is structured as follows, related work and other
approaches to these problems is discussed in Section~\ref{sec:background}. The
problem statement is in Section~\ref{sec:Problem} and the proposed algorithm is
presented in Section~\ref{sec:Algorithm}. Section~\ref{Experim} presents
experiments comparing RRT$^+$ to five other state-of-the-art planners from
OMPL.  Finally, the paper concludes with a discussion of future work in
Section~\ref{sec:Conclusion}.

\section{Related work}
\label{sec:background}
The problem of the motion planning has been proven to be PSPACE-hard~\cite{reif1994motion}.\invis{This means that true completeness is a very rare guarantee while at the same time the algorithms that provide near optimal solutions have significant computational costs.} Building robotic systems with high-dimensional configuration spaces is a field that many studies have been done for different type of systems. Generally there are four big categories: hyper-redundant manipulators, mobile manipulators, cooperative robotic systems and humanoids. We are going to present some of the important studies on those systems. \invis{before and after the introduction of sampling-based methods.}

There are some works that show applications of hyper-redundant systems as demonstrated by Ma \emph{et al.}~\cite{ma1994development} who developed a hyper-redundant arm in order to make real time and precise operations inside nuclear reactors. Liljeback~\emph{et al.}~\cite{liljeback2006snakefighter} created a snake fire-fighting hyper-redundant robot, and Ikuta \emph{et al.}~\cite{ikuta2003hyper} used a 9-DoF arm to operate during surgeries in deep areas.

There are also few works for mobile manipulators and multi-robot systems from the early 90's. The inverse kinematics for mobile manipulators where solved and optimal solutions were obtained with guarantees of safe performance in the study of Dubowsky and Vance~\cite{dubowsky1989planning}. Later studies were presented, such as the one of Yamamoto and Yun~\cite{yamamoto1992coordinating} where an algorithm for maximizing the manipulability was presented, and the work of Khabit \emph{et al.}~\cite{khatib1996vehicle} introduced an algorithm for cooperative systems of mobile manipulators. As for the cooperative systems, some early efforts include the works of Buckley~\cite{buckley1989fast} and Cao \emph{et al.}~\cite{cao1997cooperative}.

From late 90's, sampling-based methods were introduced and shown to be capable of solving challenging motion planning problems, but without guaranties for finding the solution in finite time~\cite{lavalle2006planning}. The two most famous representatives of those algorithms are probabilistic roadmaps (PRMs) by Kavraki \emph{et al.} that require preprocessing and a known, generally stable, environment~\cite{kavraki1996probabilistic} and RRTs by LaValle~\cite{lavalle1998rapidly}, that are more suitable for single query applications. Although the performance of both these techniques can be affected substantially by the number of the degrees of freedom, some studies have use them successfully for high-dimensional configuration spaces.
 
A method that uses PRM was presented by Park \emph{et al.}~\cite{park2011collision} that finds collision-free paths for hyper-redundant arms. Other studies use RRTs for motion planning of redundant manipulators, such as the work of Bertram \emph{et al.}~\cite{bertram2006integrated}, which solves the inverse kinematics in a novel way.  Weghe \emph{et al.}~\cite{weghe2007randomized} apply RRT to redundant manipulators without the need to solve the inverse kinematics of the system. A study by Qian and Rahmani~\cite{qian2013path} combines the RRT and the inverse kinematics in a hybrid algorithm in a way that drives the expansion of the RRT by the Jacobian pseudo-inverse. 

Additionally, some works use RRTs for mobile manipulators. Vannoy \emph{et al.}~\cite{vannoy2008real} proposes an efficient and flexible algorithm for operating in dynamic environments.  The work of Berenson \emph{et al.}~\cite{berenson2008optimization} provides an application of their technique to a 10-DoF mobile manipulator.

For multi-robot systems, many sampling-based algorithms have been proposed.  The study of van den Berg and Overmars~\cite{van2005prioritized} uses a PRM and presents a prioritized technique for motion planning of multiple robots. Other studies use RRT-based algorithms such as the study by Carpin and Pagello~\cite{carpin2002parallel} which introduced the idea of having multiple parallel RRTs for multi-robot systems, or the work of Wagner~\cite{wagner2015subdimensional} that plans for every robot individually and the it tries to coorbinate the motion if needed in higher dimensional spaces.  Other studies propose efficient solutions by using a single RRT~\cite{otani2009applying, solovey2015finding}. There is also some work on humanoid robots with sampling-based algorithms; Kuffner \emph{et al.} presented algorithms for motion planning on humanoid robots with both the use of PRMs~\cite{kuffner2002dynamically} and RRTs~\cite{kuffner2005motion}.  Other studies, such as the work of Liu \emph{et al.}~\cite{liu2012hierarchical}, which used RTTs for solving the stepping problem for humanoid robots. 

\invis{But while there are many different efficient algorithms for different purposes based on sampling, as it has been already described, there is very little work on proposing a general method for dealing with big configuration spaces.}  The work of Vernaza and Lee tried to extract structural symmetries in order to reduce the dimensionality of the problem~\cite{vernaza2011efficient} by also providing near-optimal solutions. This technique is more time efficient than the traditional RRT only for very high-dimensional configuration spaces. \invis{and only for known environments where the cost function is more or less stable.} Yershova \emph{et al.}~\cite{yershova2005dynamic} proposed an approach to focus the sampling in the most relevant regions. 

Finally, some other approaches attempt to deal with the curse of dimensionality in different ways. Gipson \emph{et
al.}~developed STRIDE\cite{gipson2013resolution} which samples non-uniformly
with a bias to unexplored areas of $\C$. \invis{where the density of samples is lower,
provided a fast planner for high-dimensional configuration spaces.}
Additionally, Gochev \emph{et al.}~\cite{gochev2011path} proposed a motion
planner that decreases the dimensionality by recreating a configuration space
with locally adaptive dimensionality. Yoshida's work~\cite{yoshida2005humanoid}
tries to sample in ways that exploit the redundancy of a humanoid system.  \invis{by modeling the system
with different ways that each one forms a subspace in the configuration space
and then with the use of a specific sample generator for every different
configuration model and the adaptive changes of the configurations of the
system the planner provides fast solutions.} Kim \emph{et al.}~\cite{kim2015efficient} present an RRT-based algorithm for articulated robots
that reduces the dimensionality of the problem by projecting each sample into
subspaces that are defined by a metric. Shkolnik and Tedrake ~\cite{shkolnik2009path} plan for high redundant manipulators
in the a dimensional task space with the use of Jacobian transpose and Vornoi bias. Last it is worth to mention
the work of Bayazit\emph{et al.}~\cite{bayazit2005iterative} where a PRM was used to plan in subspaces of $\C$,
creating paths that solve an easier problem than the original and then iteratively optimize the solution by extending the
subspace till it finds a valid solution. 

This paper provides a general and simple to implement RRT-based family of algorithms for efficient motion planning for arbitrary hyper-redundant systems, regardless if there is an articulated robot, or a humanoid, or an heterogeneous multi-robot system. unlike the previous works, the algorithm tries to find paths that are not only confined entirely in a global subspace but also in a subspace in which a solution can exist, since the initial and goal configurations are part of the same subspace. The algorithm not only exploits redundancy but also can provide fast solutions that satisfy some constraints by simply searching in subspaces where those constraints are satisfied.  Contrary to ~\cite{bayazit2005iterative} the algorithm searches in subspaces that are strictly lower-dimensional and instead of solving an easier problem than the original, since refining the tree will be costly for an RRT, it tries to solve a much more difficult problem by applying virtual constraints, and iteratively relaxes those virtual constraints.\invis{so the tree tries to expand only in a way that satisfies these constraints.} The algorithm is easily applicable to a variety RRT-based motion planners; we demonstrate this property by applying one variant of our method to RRT~\cite{lavalle1998rapidly}, RRTConnect~\cite{kuffner2000rrt} and Bidirectional T-RRT~\cite{jaillet2008transition}.

\section{Problem Statement}
\label{sec:Problem}

Let $\C$ denote a configuration space with $n$ degrees of freedom, partitioned
into free space $\Cfree$ and obstacle space $\Cobs$ with $\C= \Cfree \cup
\Cobs$.
The obstacle space $\Cobs$ is not explicitly represented, but instead can be
queried using collision checks on single configurations or short path segments.
Given initial and goal configurations $\qinit, \qgoal \in \Cfree$, we
would like to find a continuous path in within $\mathcal{C}_{\rm free}$ from
$\qinit$ to $\qgoal$.

\newcommand{\ssmin}{^{\rm (min)}}  
\newcommand{\ssmax}{^{\rm (max)}}  
\newcommand{\cint}[1]{\left[c_{#1}\ssmin, c_{#1}\ssmax\right]}  

For purposes of sampling, we assume that each degree of freedom in $\C$ is
parameterized as an interval subset of $\R$, 
so that
  \begin{equation}
    \C = \cint{1} \times\dots\times \cint{n} \subseteq \R^n. \label{eq:C}
  \end{equation}
Note that we treat $\C$ as Euclidean only in the context of sampling; other
operations such as distance calculations and generation of local path segments
utilize identifications on the boundary of $\C$ as appropriate for the
topology.

\section{Algorithm description}
\label{sec:Algorithm}
\subsection{The RRT$^+$ Algorithm}
The proposed family of algorithms is based on attempting to find a solution in lower
dimensional subspace of $\C$, in hopes that such a path might be found faster
than expanding the tree in all dimensions. 
The underlying idea is to exploit the redundancy of each system for each
problem.  To achieve this, the algorithm starts searching in a 1-dimensional
subspace of $\C$ that contains $\qinit$ and $\qgoal$.  If this search fails, the algorithm expands its search
subspace by one dimension.  This process continues iteratively until the
algorithm finds a path, or until it searches in all of $\C$. During all the
different searches, also called sub-searches, the tree structure is kept and
expanded in subsequent stages.

\begin{algorithm}[t]
  \caption{RRT$^+$}\label{RRT+}
  \SetKwInOut{Input}{Input}
  \SetKwInOut{Output}{Output}
  \DontPrintSemicolon
  \SetAlgoLined
  \Input{A configuration space $\C$, an initial configuration $\qinit$, and a goal configuration $\qgoal$.}
  \Output{RRT graph $G$}

  $G$.init($q_{init}$)\;
  $\Csub \gets $ 1-d subspace of $\C$, through $q_{init}$ and $q_{goal}$\; \label{lineA}

  \While{True}{
    $q_{\rm rand} \gets$ sample drawn from $\Csub$ \; \label{lineC}
    $q_{\rm near} \gets $NearestVertex$(q_{\rm rand},G)$\;
    $q_{\rm new} \gets $NewConf$(q_{\rm near}, q_{\rm rand})$\;
    $G$.AddVertex($q_{\rm new}$)\;
    $G$.AddEdge($q_{\rm near}$, $q_{\rm new}$)\;
    \If{done searching $\Csub$\label{lineD}}{
      \uIf{$\dim(\Csub) < \dim(\C)$}{
        Expand $\Csub$ by one dimension. \label{lineB}
      }
      \Else{
        \Return $G$
      }
    }
  }
\end{algorithm}

Algorithm~\ref{RRT+} summarizes the approach.
The planner starts optimistically by searching in one dimension, along the line
passing through $\qinit$ and $\qgoal$.  If this search fails to find a path---a
certainty, unless there are no obstacles between $\qinit$ and $\qgoal$---the
search expands to a planar subspace that includes $\qinit$ and $\qgoal$, then
to a 3D hyperplane, and so on until, in the worst case, the algorithm
eventually searches all of $\C$; see Fig.~\ref{fig:samples}.

For simplicity, the pseudocode in Algorithm~\ref{RRT+} shows a straightforward,
single-directional RRT$^+$, analogous to the standard vanilla RRT.
Note, however, the idea readily applies to most tree-based motion planners,
since primary difference is in how the samples are generated.
The experiments in Section~\ref{Experim}, for example, describe RRT$^+$ planners
based on the well-known goal biased and bidirectional RRT algorithms.

\begin{figure*}[t]
 \begin{center}
  \leavevmode
   \begin{tabular}{ccc}
     \subfigure[]{\includegraphics[width=0.25\textwidth]{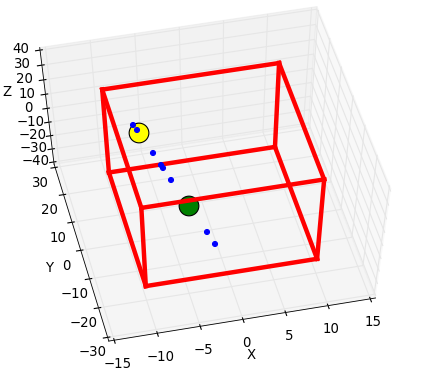}}&
     \subfigure[]{\includegraphics[width=0.25\textwidth]{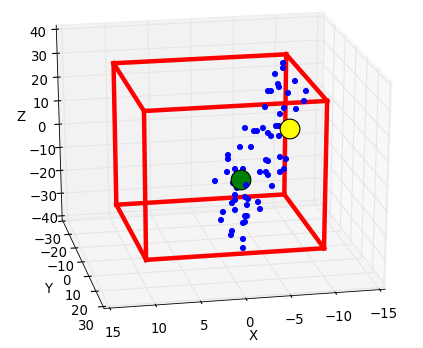}}&
     \subfigure[]{\includegraphics[width=0.25\textwidth]{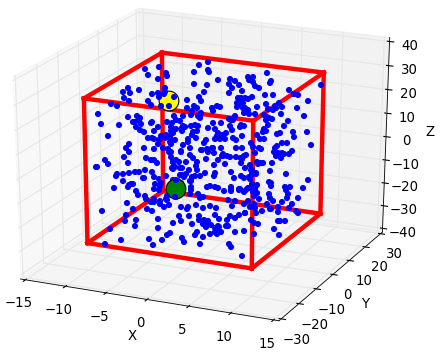}}
    \end{tabular}
  \end{center}
\caption{Sampling in one, two and three dimensions in $\C$.}
\label{fig:samples}
\end{figure*}

The description in Algorithm~\ref{RRT+} leaves three important elements
unspecified.
First, the algorithm needs a method for selecting and representing the subspace
$\Csub$ (Lines~\ref{lineA} and \ref{lineB}).
Next, we must provide a method for sampling from this subspace (Line~\ref{lineC}).
Finally, we must decide upon the conditions that must be met before moving to
the next subsearch (Line~\ref{lineD}).
Particular choices for each of these elements can be based on knowledge about
the specific types of problems being solved; this freedom is why we refer to
RRT$^+$ as a family of algorithms.  Sections~\ref{sec:subspace},
\ref{sec:sampling}, and \ref{sec:termination} describe some possibilities for
instantiating the basic framework.  Section~\ref{sec:pc} establishes conditions
under which the resulting planner is probabilistically complete.


\subsection{Choosing and representing subspaces}\label{sec:subspace}

The central idea is to search for solutions in subspaces of progressively
higher dimensions.  The primary constraint on these subspaces is that they must
contain both $\qinit$ and $\qgoal$; subspaces that violate this constraint
cannot, of course, contain a path connecting $\qinit$ to $\qgoal$.

In general, the algorithm's selections for $\Csub$ should ideally be directed
by the likelihood that a solution will exist fully within $\Csub$, but in the
absence of useful heuristics for predicting this success, simple randomness may
be quite effective, especially for highly-redundant systems.
The next examples illustrate some options.

\begin{example}\label{ex1}
  A natural choice is to let $\Csub$ be an affine subspace of $\C$.  That is,
  we select at random a flat\footnote{We use the term \emph{flat} to refer to a
  subset of $\R^n$ congruent to some lower-dimensional Euclidean space.} in
  $\C$ passing through $\qinit$ and $\qgoal$.

  \invis{An $s$-dimensional hyperplane passing through $\qinit$ and $\qgoal$ can be
  defined as the set of points $q$ satisfying the equation
    $$ q^\top = Ax^\top + \qinit^\top. $$
  in which ...$x$ is a vector $s \times 1$ \textbf{TODO: This equation is not correct, because both $q$ and
  $x$ are unknowns. Replace this with an equation in which only $q$ is unknown,
  which is satisfied for points on the subspace we want.}

  Here $A$ is a full-rank $n \times s$ matrix of the following form
  \begin{equation}
    A = \begin{bmatrix}
      \qgoal^\top-\qinit^\top & L_1^\top-\qinit^\top & \cdots & L_{s-1}^\top-\qinit^\top
    \end{bmatrix},
  \end{equation}
  in which the vectors $L_1, \ldots, L_{s-1}$ are selected randomly.}

\end{example}

\begin{example}
  Another possibility is to use for $\Csub$ a convex polytope.  That is, we
  select a collection of points $q_1,\ldots,q_m \in \R^n$ whose convex hull
  contains both $\qinit$ and $\qgoal$, and define $\Csub$ as a the set of all
  convex combinations of those points.
\end{example}

\newcommand{\Pcon}{P_{\rm con}}

\begin{example}
  A final possibility ---one that trades some generality for simplicity--- is a
  prioritized release of the degrees of freedom.
  Given a set $\Pcon \subseteq \{1, \ldots, n \}$ of degrees of freedom to be
  constrained, we can form $\Csub$ by constaining the degrees of freedom in $\Pcon$ to 
  form a line passing from $\qinit$ and $\qgoal$ and allowing the remaining DoFs to vary freely.
  
  \invis{
  using fixed numbers for some dimensions and leaving free the rest.

  some degrees of freedom are calculated as a linear combination of the remaining ones.

  Starting with all of the dimensions constrained, so that $\Pcon = \{ 1, \ldots
  n \}$, we expand $\Csub$ at each iteration by deleting one element from
  $\Pcon$, until at the final stage we have $\Pcon = \emptyset$ and $\Csub =
  \C$.
  
  \textbf{TODO: Explain what $\Csub$ looks like.  In particular, the phrase
  ``constrains all the DOFS to form a line passing from $\qinit$ and $\qgoal$''
  doesn't make much sense to me.}

  }
\end{example}

\subsection{Sampling from the subspaces}\label{sec:sampling}
Next, the algorithm requires a technique for drawing samples from $\Csub$.
\invis{Since three different methods were described briefly for choosing hyperplanes, ways to sample from the hyperplanes. Each different method and each own advantages or disadvantages will be discussed below:}
\begin{example}
  For affine subspaces represented as in Example~\ref{ex1}, samples in $\Csub$
  can be generated by selecting a random vector $r \in \R^s$,\invis{\textbf{(TODO:
  Need to say something, either here or in the enumerate below, about what
  portion of $\R^n$ the $r$ vectors are sampled from.)}} and applying an
  affine transformation:
  \begin{equation}\label{matrix}
    q_{\rm rand}= Ar + \qinit.
  \end{equation}
  The main difficulty is to ensure that the resulting $q$ lies inside the
  C-space as defined in Equation~\ref{eq:C}.  Possibilities for handling this
  difficult include rejection sampling, computing the portion of $\R^s$ that
  maps into $\C$ and generating $r$ from that region, or simply allowing
  samples to fall outside of $\Csub$, and tolerating the potential distortion
  this would induce to the growth of the tree.
  
  \invis{There are at least three
  possibilities for how this might be done.
  \begin{enumerate}
    \item \invis{TODO: Rejection sampling} Rejection Sampling: Using rejection sampling
    for the samples outside the $\C$ can be an easy way to sample in the $\Csub$.

    \item \invis{TODO: Compute the intersection} Sampling straight from the intersection $\Csub$: 
    In order to sample in the intersection of the hyperplane with $\C$, the $r$ vector needs 
    to be tuned properly, which it can be done by finding the limits of the intersection. But it
    is a process that it is considered complicated and exponential to the dimensionality, although 
    recent studies provide linear algorithms \cite{lara2009hyperbox}.

    \item \invis{TODO: Use the samples anyway.} Use samples anyway: 
    Sampling even outside of $\C$, can work and properly expand the RRT, 
  	with the assumption that samples who don't lie in the $\C$ will not reduce 
  	much the expansion of the tree by forcing it to expand outside of the configuration space.
  \end{enumerate}

  \invis{TODO: Reorganize these ideas into the enumerate list above.}

  }
\end{example}

\begin{example}
With the formulation described in
Example~2, uniform samples from the convex hull of $q_1,\ldots,q_m$ can be
generated using the algorithms of Linial~\cite{linial1986hard} or
Trikalinos and Valkenhoef~\cite{trikalinos2014efficient}.

\invis{
on the hyperplane and in the configuration space:  \invis{TODO: Write is equation using summation notation ($\sum$).\footnote{CSCE750 is not a drill.}}

\begin{equation}
	q_{\rm rand} = \sum^{s}_{i=1}{r_1^{(i)} (L_i-q_{init}) + r_2^{(i)} (L_i-q_{goal})}
\end{equation}

Where every couple $r^{(i)}$ are two random numbers normalized to 1 and $L_i$ the vertices described in Example 2. But this method arises two new problems:
\begin{enumerate}
    \item Not only the amount of points of the intersection $\Csub$ grow exponentially 
    to the dimensions but also the number of the points is not directly related 
    to the dimensionality, and estimating the number of the vertices for a polytope 
    is a $\#P$ complex problem\cite{linial1986hard}.

    \item Even if the first problem was solved, there is no easy way to safely generate 
    samples uniformly in the $\mathcal{C}$ since some areas may have a higher density of different planes 
    defined by the $L_i$s than others. There are some studies proposing such techniques for uniform planning
    but the complexity grows up to factorial \cite{trikalinos2014efficient}.
  \end{enumerate}


}
\end{example}

\begin{example}
Prioritized sampling as described in Example~3, a very efficient linear time
method to produce samples \invis{in the intersection of hyperplanes passing
through $\qinit$ and $\qgoal$} is by initially generating a sample within $\C$
along the line between $\qinit$ and $\qgoal$ by selecting a random scalar $r$
and applying:
  \begin {equation} \label{get_sample}
    q_{\rm rand}^{(i)}= (\qgoal^{(i)}-\qinit^{(i)})r+\qinit^{(i)}
  \end{equation}
The algorithm then modifies $q_{\rm rand}$ by inserting, for each DoF not in
$\Pcon$, a different random value within the range for that dimension; see Algorithm~\ref{sampler}.

\invis{
To ensure that the original sample in $\C$ along the line between $\qinit$ and $\qgoal$ is within $\C$,

If  $0\leq r \leq 1$ then samples between $q_{init}$ and $q_{goal}$ are
generated, so there is the need to scale properly the $r$ so $ratio_{min}\leq r
\leq ratio_{max}$ where $ratio_{min} \leq 0$, $ratio_{max} \geq 0$ and both
when applied to Equation \ref{get_sample} with $Dimension(P_{con}) =
Dimension(\mathcal{C})$, they give the two different intersection points on the
borders. In the Scale function those values are found by finding the
intersection of all the hyper-planes of the borders of $\mathcal{C}$ with the
line, and keeping the two values that give points in the $\mathcal{C}$. This
function is also used by the RRT$^{+}$ at the initializations of line 2, so it
is a process that happens only once during the planning. Additionally the DOFs
that are released are sampled from their entire range defined by the
$\mathcal{C}$ as it is shown in Algorithm \ref{sampler} which is used in line 6
of RRT$^{+}$.

\begin{algorithm}
\caption{Scale}
\label{scale}
\SetKwInOut{Input}{Input}
\SetKwInOut{Output}{Output}
\DontPrintSemicolon
\SetAlgoLined
\Input{Initial configuration $q_{init}$, goal configuration $q_{goal}$, dimensionality of configuration space $S$,limits of configuration space $c^{(min)}$, $c^{(max)}$.}
\Output{minimum $ratio_{min}$, maximum $ratio_{max}$ and range vector $D$ of scale}

$D \gets \qgoal - \qinit$ \;

\For{\normalfont $s \gets 1$ \textbf{to} $S$}{
\For{\normalfont $c$ \textbf{ in } $\{ c_s^{(min)}, c_s^{(max)} \}$}{
	$w \gets [0, 0, \dots,0]$\;
	$n \gets [0, 0, \dots,0]^T$\;
	$n[s] \gets 1$\;
	$w[s] \gets c$\;
	
	\eIf{($D*n=0$)}{
		\textbf{continue}\;
	}{
		$t \gets \frac{(w-q_{init})*n}{D*n}$\;
		
		\If {($t*D+q_{init}$ \textbf{in} $\mathcal{C}$)}{
			\eIf{($t<0$)}{
				$ratio_{min} \gets t$\;
			}
			{$ratio_{max} \gets t$\;}
		
		}
	}
}
}
\Return $ratio_{min}$, $ratio_{max}$, $D$\;

\end{algorithm}
}

\begin{algorithm}
\caption{Prioritized Sampler}
\label{sampler}
\SetKwInOut{Input}{Input}
\SetKwInOut{Output}{Output}
\DontPrintSemicolon
\SetAlgoLined
\Input{Initial configuration $\qinit$, goal configuration $\qgoal$, set of constrained DoF $\Pcon$}
\Output{Sample configuration $q$}

$q \gets $ random point in $\C$ along line from $\qinit$ to $\qgoal$ \;
\For{$s \in 1,\ldots,n$}{
	\If{$s \notin \Pcon$}{
	  $q[s] \gets$ Random($0$, $1$)$*(c^{(max)}_s-c^{(min)}_s)+c^{(min)}_s$\;
	}
}

\Return $q$\;
\end{algorithm}

While this approach is simple, it loses some generality since only a finite number of flats can be explored; given specific $\qinit$, $\qgoal$, and $\Pcon$ the subspaces are fully determined.
\invis{By using the formal algebraic description of a hyperplane, every subspace can be formalized as:
\begin{equation}
a_1x_1+a_2x_2+...+a_ix_i+...+a_{n-1}x_{n-1} +a_nx_n = b
\end{equation}
So the subspace in every stage with dimensionality $Dim$ will be:
\begin{equation}
ax+a_1x_1+a_2x_2+...+a_ix_i+...+a_{Dim-1}x_{Dim-1} = b
\end{equation}

So the hyperplane in each iteration of the algorithm can be found with:
\begin{itemize}
\item $a=\sum D[i]$ and $ ratio_{min}\leq x\leq ration_{max}$, for every $i\in P_{con}$
\item $a_i=c^{(max)}_i-c^{(min)}_i$ and $0\leq x_i \leq 1$, for every $i\in P_{ind}$
\item $b=-(Sum_{con}+Sum_{free})$ where $Sum_{con}= \sum {q_{init} [i]}$ for every $i\in P_{con}$ and $Sum_{free}= \sum {C_{min} [i ]}$ for every $i \in P_{ind}$
\end{itemize}}
\end{example}

\subsection{Terminating the subsearches}\label{sec:termination}
The only remaining detail to 
be discussed is for how long the search in the each subspace should continue.
There are multiple ways to deal with this problem and good online
heuristics can be found, but for simplicity in this paper we assume that there
is a fixed number of samples $k_i$ for iteration $i$.  Algorithm~\ref{findsize}
presents a technique for selecting the $k_i$ values, based on a maximum number
of total samples $Q_{\rm total}$.  The idea is to exponentially increase the number
of samples in each successive subsearch, acknowledging the need for more
samples in higher dimensions.

\invis{For this purpose an additional vector $V=[v_1,v_2,\dots,v_n]$
containing the desired factors that the previous number of samples should be
multiplied to pick for each different hyper-plane should be introduced.
Starting with an initial number of samples for the first step in each $j$ step
the sampler samples $v_j*k_{j-1}$ number of samples.}

\begin{algorithm}
\caption{FindSampleSize\label{findsize}}
\SetKwInOut{Input}{Input}
\SetKwInOut{Output}{Output}
\DontPrintSemicolon
\SetAlgoLined
\Input{The total number of samples $Q_{\rm total}$, dimension of configuration space $n$.}
\Output{$k_1, \ldots, k_n$}

$v \gets \exp{\frac{\ln{Q_{\rm total}}}{n}}$\;
\For{$s \gets 1, \ldots, n$}{
	$k_s \gets v^s$\;
}
\Return $k_1,\ldots, k_n$\;
\end{algorithm}

For the sake of simplicity if we assume that $v_i=v$, for some constant $v$,
then the number of samples for each $j$ step will be:
\begin{equation}
  k_j=v^j
\end{equation} 

The total number of samples $Q_{\rm total}$ for all the steps will be the sum of the above geometric sequence:
\begin{equation}
Q_{\rm total}=\sum_{j=1}^n{v^j}=\frac{v(v^n-1)}{v-1}
\end{equation} 

\invis{By the these formulations the number of the samples for each state can be produced easily by just setting a desired maximum number of samples $Q_{max}$ as a time out, in the same way as in the regular RRTs. By simply find the proper $v$:

\begin{equation}
v=\exp{\frac{\ln{Q_{max}}}{n}}= {Q_{max}}^{\frac{1}{n}}
\end{equation} }

It can be shown that in the unlikely worst case the RRT$^+$ algorithm would be slightly slower than the regular RRT since:

\begin{equation}
Q_{max}\le \frac{v(Q_{max}-1)}{v-1} \approx \frac{v}{v-1}Q_{max} 
\end{equation}   

Also it is important that the efficiency of the algorithm in the worst case, is strongly related to the value of $v$. For bigger values of $v$, so bigger values of $Q_{\rm total}$, the difference in the efficiency in the worst case is reduced. So by setting as $Q_{max}$ the average number of samples that the algorithm before the application of the method the new algorithm succeeds on even in the worst uncommon case for a hyper-redundant system, having similar performance.

Last, putting everything together, the PrioritizedRRT$^+$, based on Examples~3
and 6, is presented in Algorithm \ref{PRRT+}.

\begin{algorithm}
\caption{Prioritized RRT$^+$\label{PRRT+}}
\SetKwInOut{Input}{Input}
\SetKwInOut{Output}{Output}
\DontPrintSemicolon
\SetAlgoLined
\Input{Initial configuration $\qinit$, goal configuration $\qgoal$, number of samples $Q_{\rm total}$.}
\Output{RRT graph $G$}

$P_{con} \gets$ $\{1,2,\dots,n\}$\;
$G$.init($q_{init}$)\;
$K \gets$ FindSampleSize($N$, $S$)\;
\For{$s \gets 1,\ldots, n$}{
	\For{\normalfont $i \gets 1, k_s$}{
	$q_{\rm rand} \gets $PrioritizedSampler($S$, $\qinit$, $\qgoal$, $\Pcon$)\;
	$q_{\rm near} \gets $NearestVertex$(q_{\rm rand},G)$\;
	$q_{\rm new} \gets $NewConf$(q_{\rm near}, q_{\rm rand})$\;
	$G$.AddVertex($q_{\rm new}$)\;
	$G$.AddEdge($q_{\rm near}$, $q_{\rm new}$)\;
}
Extract($\Pcon$)\;
}
\Return $G$\
\end{algorithm}

In Fig. \ref{fig:samples}, the visual result of the sampler for a 3-dimensional configuration space and the three different states, with $Q_{max}=512$.  The initial state is shown as a yellow ball; the goal state is shown as a green ball, and the red lines indicate the boundary of $\C$.

\subsection{Probabilistic Completeness}\label{sec:pc}

\begin{theorem}
  The RRT$^+$ is probabilistically complete if the following conditions hold.
  \begin{itemize}
    \item $\Csub$ in the last stage is the entire $\C$.
    \item The sampler can generate any point in $\Csub$.
    \item In each stage before the final stage, only a finite number of samples
    is generated.
  \end{itemize}
\end{theorem}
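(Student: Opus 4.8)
The plan is to reduce the claim to the well-known probabilistic completeness of the standard single-tree RRT, treating the earlier low-dimensional subsearches as a harmless finite preamble. The guiding observation is that, under the stated hypotheses, RRT$^+$ eventually behaves exactly like an ordinary RRT that samples densely from all of $\C$, and merely begins from a tree that already contains some vertices.

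First I would argue that the algorithm reaches the final stage after only finitely many samples, with probability one. Since $\dim(\C)=n$ is finite and each expansion of $\Csub$ raises its dimension by one (Line~\ref{lineB}), there are at most $n$ stages. By the third hypothesis every stage before the last consumes only finitely many samples, so their total is finite; hence after a finite (though random) number of iterations the condition $\dim(\Csub)=\dim(\C)$ is met and, by the first hypothesis, $\Csub=\C$ from that point on. Let $G_0$ denote the tree present at the moment the final stage begins. By construction $G_0$ is a finite tree rooted at $\qinit$, and all of its vertices and edges lie in $\Cfree$, since every sample was accepted only after the usual collision checks.

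Next I would observe that, during the final stage, RRT$^+$ is literally an instance of the classical RRT: it samples $q_{\rm rand}$ from $\Csub=\C$, selects the nearest vertex, and extends toward it via \textsc{NewConf}. By the second hypothesis the sampler can produce any point of $\C$, so the induced sampling distribution has full support on $\C$ and the sample sequence is dense in the limit. The only difference from a textbook RRT is that the search starts from $G_0$ rather than from the singleton tree $\{\qinit\}$. I would then invoke the standard completeness theorem for RRT (e.g.\ the LaValle--Kuffner argument): if a collision-free path from $\qinit$ to $\qgoal$ exists, one encloses it in a positive-clearance tube, covers the tube with finitely many overlapping balls, and shows that as the number of samples grows the probability that the tree propagates through successive balls and reaches a neighborhood of $\qgoal$ tends to one.

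The main obstacle is to confirm that seeding the search with the extra tree $G_0$ does not compromise this argument, and to handle the randomness of the preamble cleanly. For the former, I would note that the completeness proof requires only that the tree contain $\qinit$ and grow by the standard extension rule; additional vertices can never shrink the set of configurations reachable in a given number of steps, so the ball-propagation probabilities are only improved by $G_0$. For the latter, I would condition on the realization of the finite preamble: for every outcome $G_0$ the final stage attains a connection probability tending to one, and averaging over the almost surely finite preamble via the law of total probability yields probabilistic completeness for the whole algorithm. Because all of the asymptotic work is carried by the final, densely sampled stage, the finitely many low-dimensional stages affect neither the limiting probability nor the support of the sampler.
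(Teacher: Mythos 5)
Your proposal is correct and follows essentially the same route as the paper's own proof: both arguments observe that the finitely many bounded subsearches guarantee the final stage (where $\Csub=\C$) is reached after finitely many samples, and then reduce to the known probabilistic completeness of the standard RRT. Your version merely fills in details the paper leaves implicit, namely that seeding the final stage with the nonempty tree $G_0$ cannot hurt the ball-propagation argument and that one should condition on the finite preamble before invoking the limit.
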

 
\begin{proof}
  Under these conditions, RRT$^+$ is guaranteed to reach its last iteration in
  finite time.  In this final iteration, the algorithm behaves in the exact
  same way as the RRT, so RRT$^+$ inherits the probabilistic completeness of
  RRT.
\end{proof}

\section{Experiments}
\label{Experim}

For the experiments three new planners in the RRT$^+$ family were developed in
within OMPL~\cite{sucan2012open}.  These planners work by applying the
prioritized technique to three RRT variants: (1) RRT with goal bias of $0.5$.
(2) RRT-Connect and (3) the bidirectional T-RRT~\cite{jaillet2008transition}.
The T-RRT variant is intended to demonstrate the applicability of the idea to a
powerful costmap planner.

In order to show the ability of the new planners to adapt in different problems, 
the prioritization was chosen randomly for each run, although an optimization 
might be expected to give faster results. For implementation reasons, the parameter $Q_{max}$ was specified
not by indicating the number of the samples, but the 
total desired time the subsearch should be done to a machine with 6th Generation 
Intel Core i7-6500U Processor (4MB Cache, up to 3.10 GHz) and 16GB of DDR3L 
(1600MHz) RAM.  As expected the performance proved to be sensitive to this parameter.
 
Four different environments were tested 100~times each for a 17-DoF kinematic chain, with decreasing redundancy: The Empty environment, a Random Easy environment, a more Cluttered Random environment, and the Horn environment; see Fig.~\ref{fig:res}. In all the cases, the RRT$^+$ versions of each planner were faster and in most cases significantly faster.  Details appear in Fig.~\ref{fig:graph} and Table~\ref{table}. 

As can be observed, the PrioritizedRRT$^+$-Connect not only outperformed the RRT-Connect by a wide margin, but also 
all the other planners using uniform sampling. Additionally, it outperformed robust planners with biased sampling in
environments with more redundancy, but KPIECE~\cite{csucan2009kinodynamic} and STRIDE~\cite{gipson2013resolution} are the clear winners in less redundant problems.

Interestingly, for each problem, the single fastest solution across all trials
was generated by PrioritizedRRT$^+$-Connect.  This suggests a good choice of
prioritization may give faster results in a consistent way.

\invis{Currently, we are studying on implementing the method on KUKA YouBot mobile manipulator of 10 DoF, shown in Figure \ref{fig:kuka}, 
by planning simultaneously for the manipulator and the omni-directional base. The base is modeled as 
two prismatic joints for the translation, and one revolute for the orientation.

\begin{figure}[ht]
\centering
\fbox{\includegraphics[width=0.4\textwidth, clip=true]{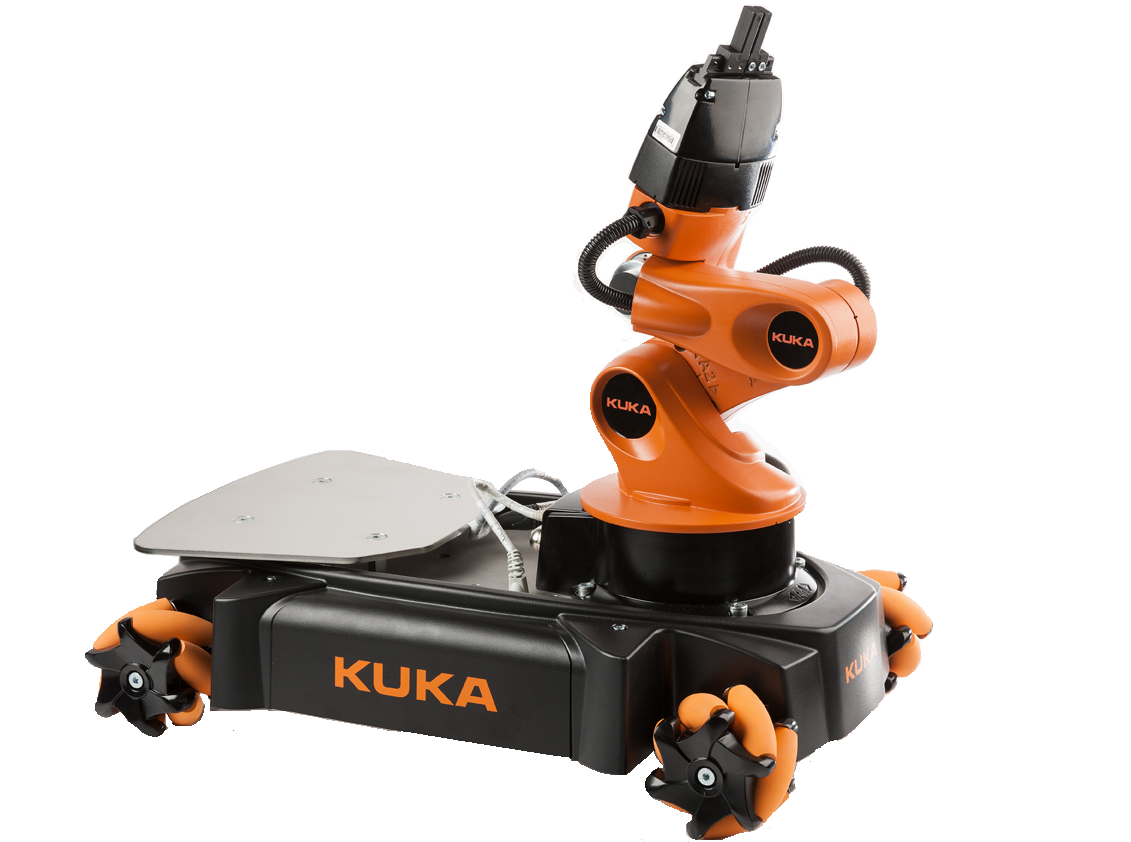}}
\caption{The KUKA YouBot ombi-directional mobile manipulator.}
\label{fig:kuka}
\end{figure}
}
\begin{figure*}[thpb]
 \begin{center}
  \leavevmode
   \begin{tabular}{cccc}
     \subfigure[]{\fbox{\includegraphics[width=0.2\textwidth]{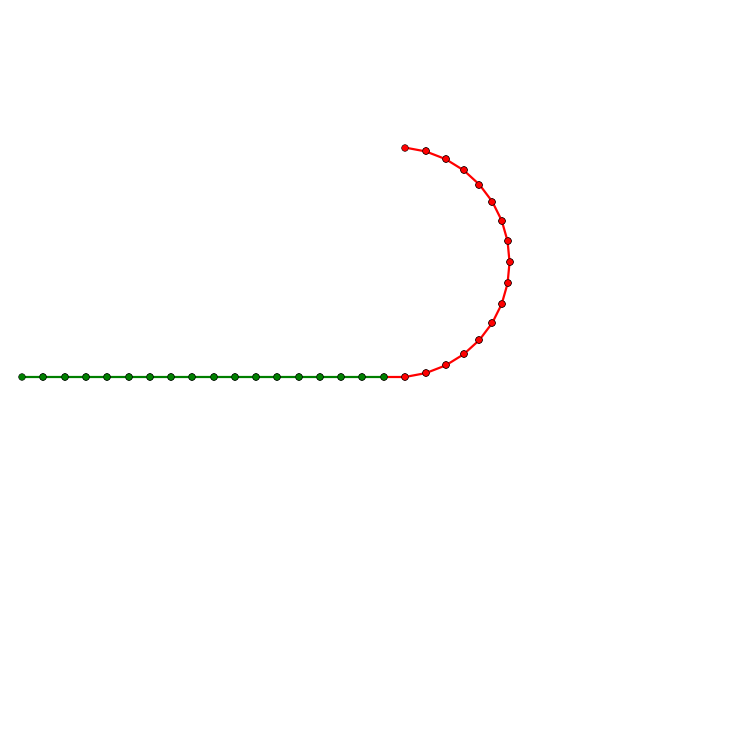}}}&
     \subfigure[]{\fbox{\includegraphics[width=0.201\textwidth]{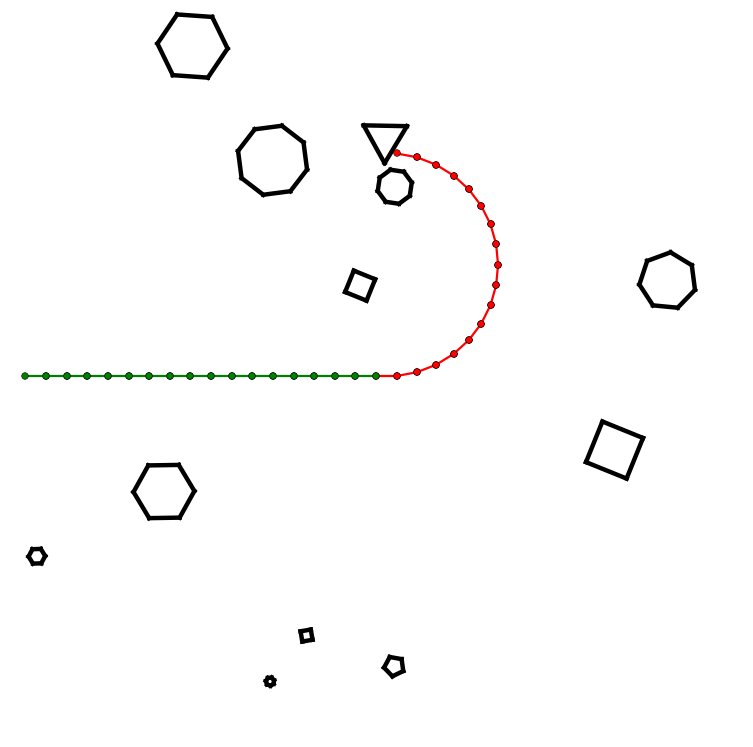}}}&
     \subfigure[]{\fbox{\includegraphics[width=0.197\textwidth]{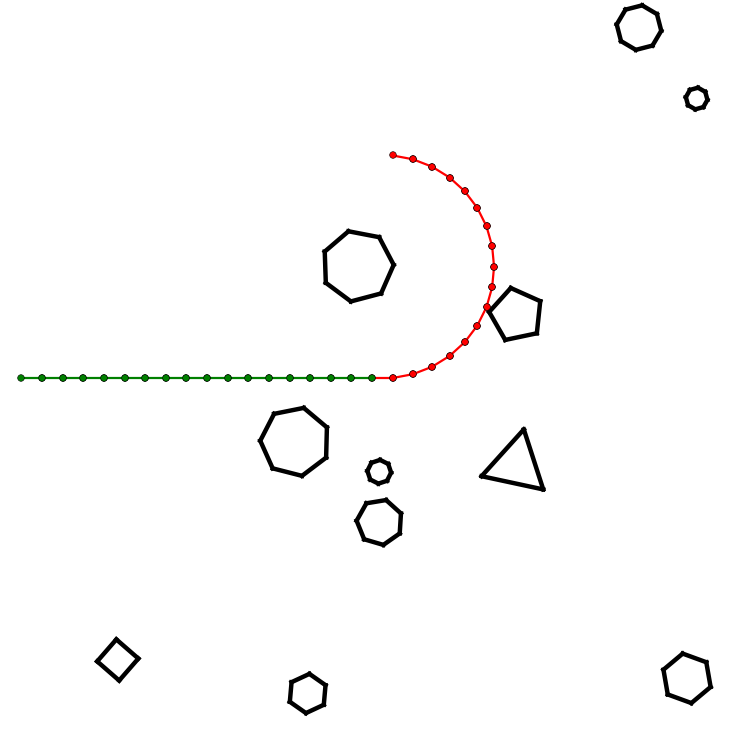}}}&
     \subfigure[]{\fbox{\includegraphics[width=0.205\textwidth]{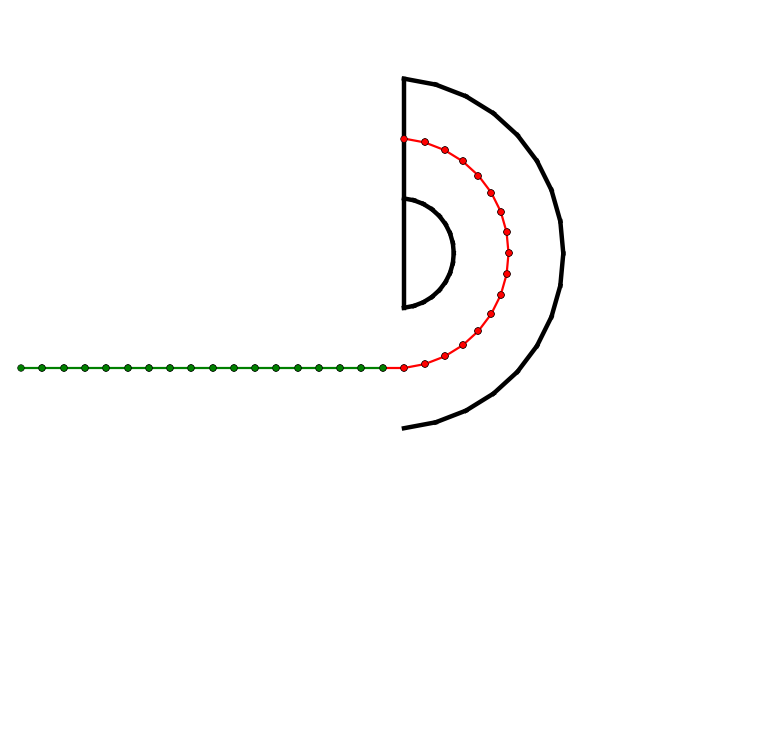}}}\\
     \subfigure[]{\fbox{\includegraphics[width=0.2\textwidth]{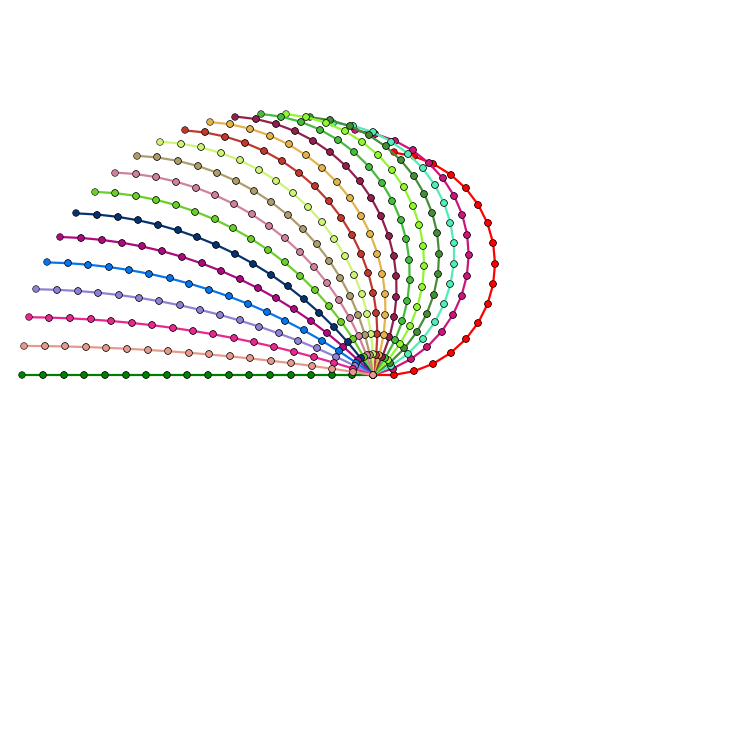}}}&
     \subfigure[]{\fbox{\includegraphics[width=0.2032\textwidth]{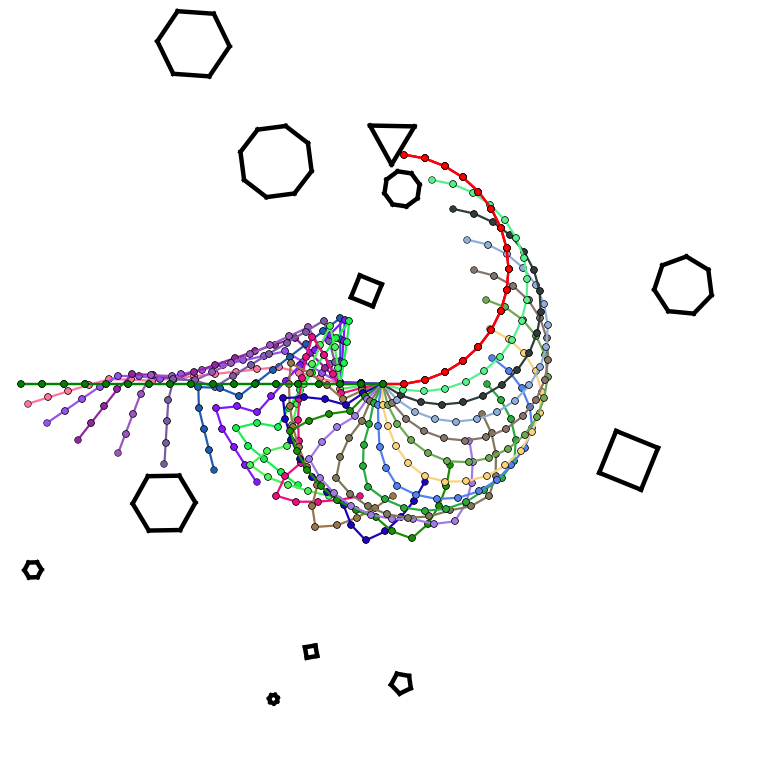}}}&
     \subfigure[]{\fbox{\includegraphics[width=0.1995\textwidth]{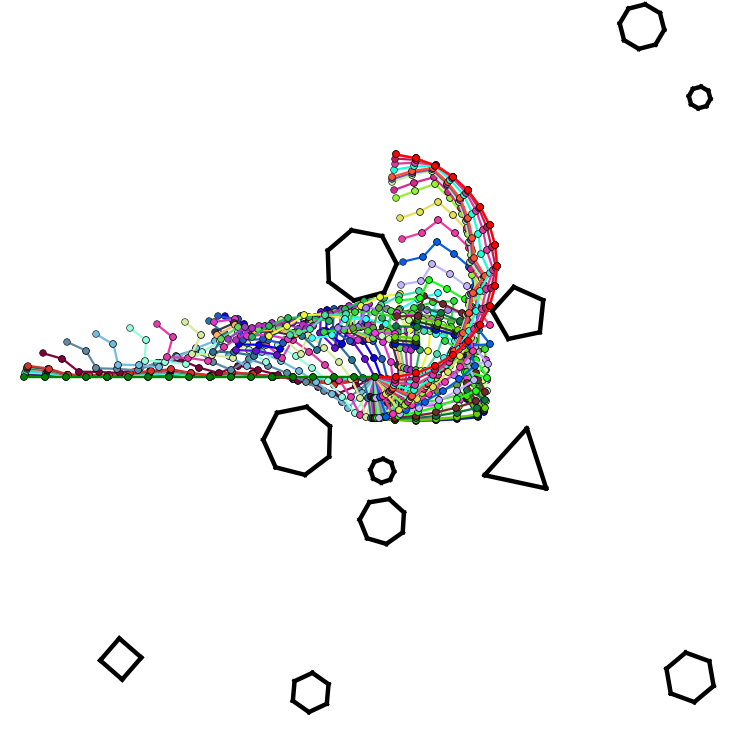}}}&
     \subfigure[]{\fbox{\includegraphics[width=0.204\textwidth]{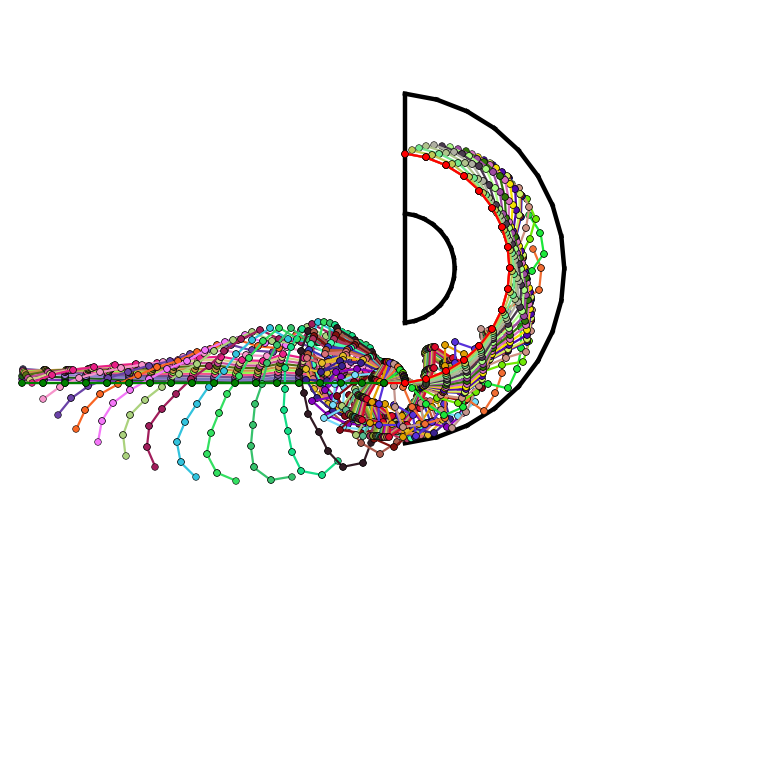}}} 
     \end{tabular}
  \end{center}
\caption{The four different environments the Empty, the Easy Random, the Cluttered Random and the Horn (a-d) with red indicating the $\qinit$ and green the $\qgoal$, and four different solutions for those environments (e-h) with PrioritizedRRT$^+$-Connect respectively.}
\label{fig:res}
\end{figure*}

\begin{figure*}[hpb]
 \begin{center}
  \leavevmode
   \begin{tabular}{cc}
     \subfigure[]{{\includegraphics[width=0.4\textwidth]{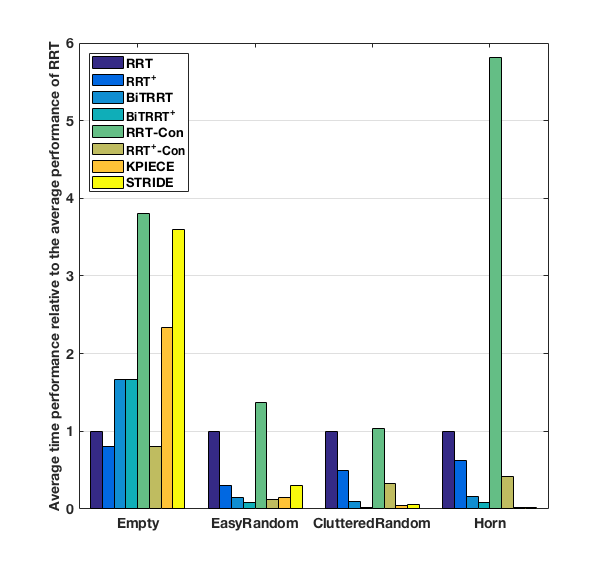}}\label{fig:mean}}&
     \subfigure[]{{\includegraphics[width=0.4\textwidth]{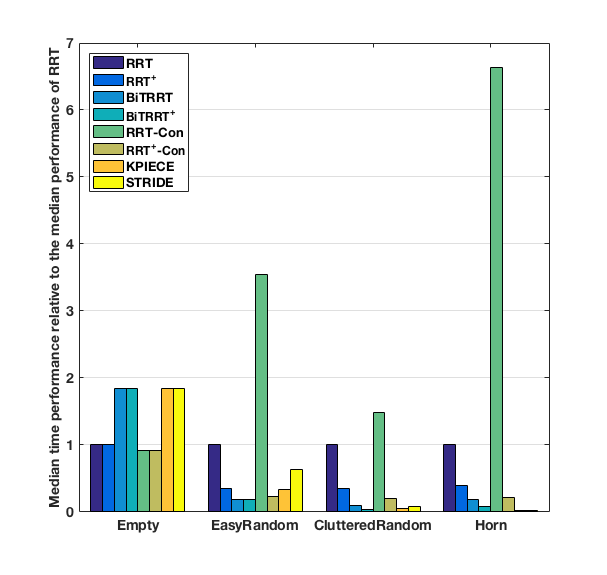}}\label{fig:median}}
	\end{tabular}
  \end{center}
\caption{The mean \subref{fig:mean} and the median \subref{fig:median} for the four environments and the eight planners scaled by the execution time of the RRT. Please refer to Fig. \ref{fig:res} for the four test\hyp environments and to Table \ref{table} for the numerical values of the mean and standard deviation performance of the eight planners. Note that $^+$ indicate the proposed algorithms.}
\label{fig:graph}
\end{figure*}

\begin{table*}[t]
\centering
\caption{Averages with standard deviation in the 4 environments and for 100 runs for every planner in seconds.}
\label{table}
\begin{tabular}{|l|l|l|l|l|}
\hline
             & Empty             & Easy Random       & Cluttered Random          & Horn               \\ \hline
RRT                       & 0.0015$\pm$ 0.0004 & 0.2134$\pm$ 0.3551 & 14.1118$\pm$ 16.8046 & 11.4807$\pm$ 12.2223 \\ \hline
RRT$^+$     			  & 0.0012$\pm$ 0.0002 & 0.0658$\pm$ 0.1268 & \text{ } 7.0245$\pm$ 11.3234 & \text{ } 7.2384$\pm$ 10.8390  \\ \hline
BiTRRT                    & 0.0025$\pm$ 0.0036 & 0.0306$\pm$ 0.0721 & \text{ } 1.4039$\pm$ 1.7033 & \text{ } 1.7877$\pm$ 2.2717   \\ \hline
BiTRRT$^+$				  & 0.0025$\pm$ 0.0009 & 0.0179$\pm$ 0.02   & \text{ } 0.3434$\pm$ 0.4004  & \text{ } 0.9814$\pm$ 1.5429   \\ \hline
RRT-Con                   & 0.0057$\pm$ 0.0173 & 0.2924$\pm$ 0.7734 & 14.5406$\pm$ 11.1380 & 66.7144$\pm$ 76.6271 \\ \hline
RRT$^+$-Con 			  & 0.0012$\pm$ 0.0001 & 0.0267$\pm$ 0.0282 & \text{ } 4.6150$\pm$ 12.4115 & \text{ } 4.7672$\pm$ 8.9909   \\ \hline
KPIECE                    & 0.0035$\pm$ 0.0026 & 0.0315$\pm$ 0.0223 & \text{ } 0.5744$\pm$ 0.51    & \text{ } 0.1537$\pm$ 0.1153   \\ \hline
STRIDE                    & 0.0054$\pm$ 0.0044 & 0.0656$\pm$ 0.0529 & \text{ } 0.8630$\pm$ 0.75    & \text{ } 0.1800$\pm$ 0.4681   \\ \hline
\end{tabular}
\end{table*}

\section{Conclusion} 
\label{sec:Conclusion}
We presented a general novel family of algorithms for fast motion planning in high dimensional configuration spaces. The algorithm should provide in the common case faster solutions than the regular RRT based methods, though in the uncommon worst case the solutions are somewhat slower.  \invis{The motivation of the work is that in many unpredictable situations that a robotic system can encounter, fast solutions can guarantee its survival while belated optimal solutions may cause severe damages.}

The algorithm is general enough to be applied to a broad variety of motion
planning problems.  For example, our experiments show potential for planning
with costmaps via adaptation of the bidirectional TRRT. 
The planner also can be adjusted to each problem and provide faster results
than the regular versions of the planners.

\invis{Additionally, RRT$^+$ arises new problems that are both related to applications
in complex robotic systems and the further development of the idea.}
A number of important question remain unanswered.  For example, it would be
very interesting to find an efficient way to pick, while searching, subspaces
that are more likely to contain solutions. Also, it is important to provide a
general way to identify when a new iteration should begin, using a metric of the
expansion of the tree, and eliminate the sensitivity to the $Q_{\rm max}$
parameter. Moreover, by using the previous metric it is possible to identify when
the tree overcame a difficult area and then reduce the dimensionality of the
search, in order to accelerate the results further.

Currently, there are two planners in OMPL that outperform the
PrioritizedRRT$^+$: KPIECE and STRIDE, which sample non-uniformly with a bias
to narrow spaces.  The integration of the ideas underlying RRT$^+$ into those
planners to accelerate the results is left for a future work.

Lastly, we plan to explore in the future the ability of the planner to
efficiently produce paths that satisfy some natural constraints of each system.

\section*{ACKNOWLEDGMENT}

The authors would like to thank the generous support of the Google Faculty Research Award and the National Science Foundation grants (NSF 0953503, 1513203, 1526862, 1637876).

\invis{Here is Jason's grant info. There should be two:

This material is based upon work supported by the National Science
Foundation under Grant No. 1526862.
This material is based upon work supported by the National Science
Foundation under Grant No. 0953503.}



\bibliography{ref}
\bibliographystyle{template/IEEEtran}

\end{document}